\documentclass[a4paper,fleqn]{cas-dc}

\usepackage[numbers]{natbib}
\usepackage{hyperref}
\usepackage{url}

\usepackage{mathtools}  % Loads amsmath automatically
\usepackage{amsfonts}
\usepackage{amssymb}
\usepackage{amsthm}
\usepackage{bm}
\usepackage{siunitx}
\usepackage{pifont}     % For checkmarks/crosses (\ding)

\usepackage{tabularx}   % Required for tabularx errors
\usepackage{booktabs}   % For professional quality tables
\usepackage{adjustbox}  % For resizing tables
\usepackage{multirow}   % For multi-row cells
\usepackage{makecell}   % For multi-line cells
\usepackage{threeparttable}
\usepackage{graphicx}
\usepackage{subcaption} 
\usepackage{caption}
\usepackage{float}
\usepackage{wrapfig}

\usepackage[utf8]{inputenc}
\usepackage{soul}       
\usepackage[normalem]{ulem} 
\usepackage{xspace}     
\usepackage{enumitem}   
\usepackage[dvipsnames]{xcolor} % Extended color names

\usepackage{newunicodechar}
\newunicodechar{≥}{\geq} 

\usepackage{listings}
\usepackage{minted}
\usepackage{algorithm}
\usepackage{etoolbox}

\lstdefinestyle{mystyle}{
    backgroundcolor=\color{white},
    commentstyle=\color{black},
    keywordstyle=\color{teal},
    numberstyle=\tiny\color{gray},
    stringstyle=\color{BrickRed},
    basicstyle=\ttfamily\footnotesize,
    breaklines=true,
    numbers=none,
    frame=single,
    showspaces=false,
    showstringspaces=false,
    showtabs=false,
    tabsize=2,
    morekeywords={Task, Decomposition, KPM, Scores}
}

\lstdefinestyle{verifycode}{
  language=Python,
  basicstyle=\linespread{0.80}\ttfamily\bfseries\footnotesize,
  keywordstyle=\color{blue!99!black}\bfseries,
  commentstyle=\color{gray!99},
  showstringspaces=false,
  breaklines=true,
  breakatwhitespace=true,
  tabsize=2,
  numbers=none,
  frame=none,
  backgroundcolor=\color{white},
  aboveskip=0.1em, belowskip=0.1em,
  xleftmargin=0em, xrightmargin=0em
}

\usepackage{tikz}
\usepackage[title]{appendix}

\theoremstyle{definition}
\newtheorem{definition}{Definition}
\newtheorem{theorem}{Theorem}
\newtheorem{lemma}{Lemma}

\AtBeginEnvironment{definition}{\setlength{\parindent}{0pt}}
\AtBeginEnvironment{theorem}{\setlength{\parindent}{0pt}}
\AtBeginEnvironment{lemma}{\setlength{\parindent}{0pt}}

\newcommand{\rectgreen}[1]{%
  \tikz[baseline=(char.base)]{
    \node[shape=rectangle,rounded corners=2pt,fill=ForestGreen,inner sep=1.5pt,minimum height=1.2em] (char) {\textcolor{white}{#1}};
  }%
}

\newcommand*\circledgreen[1]{\tikz[baseline=(char.base)]{
    \node[shape=circle, fill=ForestGreen, inner sep=1.3pt] (char) {\textcolor{white}{\textbf{#1}}};}}

\begin{document}

\let\WriteBookmarks\relax
\def\floatpagepagefraction{1}
\def\textpagefraction{.001}

% Short title
\shorttitle{Verify-RL}

% Short author
\shortauthors{Qasim et~al.}

% Main title of the paper
\title [mode = title]{VERIFY-RL: Verifiable Recursive Decomposition for Reinforcement Learning in Mathematical Reasoning}                      
% Title footnote mark
% \tnotemark[1] 

% First author

\author[aff1]{Kaleem Ullah Qasim}
\ead{kaleem@my.swjtu.edu.cn}

% Second author
\author[aff1]{Jiashu Zhang}
\cormark[1] % Corresponding author indication
\ead{jszhang@home.swjtu.edu.cn}
\cortext[1]{Corresponding author}
 
%Third
\author[aff1]{Hao Li}
\ead{hli@my.swjtu.edu.cn}

\author[aff1]{Muhammad Kafeel Shaheen}
\ead{kafeel@my.swjtu.edu.cn}

% Affiliations
\affiliation[aff1]{organization={School of Computing and Artificial Intelligence, Southwest Jiaotong University}, 
                   city={Chengdu}, 
                   postcode={611756}, 
                   country={China}}

\credit{Data curation, Writing - Original draft preparation}

\begin{abstract}
Training language models to solve complex mathematical problems benefits from curriculum learning progressively training on simpler subproblems. However, existing decomposition methods are often heuristic, offering no guarantees that subproblems are simpler, that solving them aids the parent task, or that their relationships are mathematically grounded. We observe that symbolic differentiation provides a natural structure for verified decomposition: calculus rules explicitly define how expressions reduce to simpler components with provable properties. We introduce \textit{Verify-RL}, a framework where every parent-child decomposition satisfies three verifiable conditions: strictly decreasing structural complexity, solution containment, and formal rule derivation. Unlike heuristic methods where a significant fraction of decompositions are invalid our properties admit automatic verification through symbolic computation, achieving "verification by construction." Experiments demonstrate that eliminating invalid decompositions yields sizable gains, accuracy on the hardest problems more than doubles from 32\% to 68\%, with a 40\% relative improvement overall.
\end{abstract}

\begin{keywords}
Recursive Decomposition \sep Curriculum learning \sep Mathematical Reasoning \sep Symbolic Differentiation \sep Large Language Models
\end{keywords}

\maketitle
\section{Introduction}
\label{sec:introduction}

Curriculum learning improves mathematical reasoning in language models by decomposing complex problems into simpler subproblems~\cite{cot,gsm8k}. Recent work shows that reinforcement learning with verifiable rewards can push models beyond their supervised training distribution~\cite{deepseek-r1}, and that ordering problems by difficulty improves both sample efficiency and final performance~\cite{e2h2025_curriculum,dwl-curriculum}. The core challenge lies in constructing such curricula: how do we ensure that subproblems are genuinely easier, that solving them helps solve the original, and that parent-child relationships are mathematically meaningful?

Current decomposition methods face a fundamental limitation. Heuristic approaches rely on language models to propose easier variants of hard problems~\cite{ladder}, offering flexibility but no guarantees about validity. When evaluated against formal criteria, a significant fraction of generated decompositions fail basic properties: the child is not actually easier, its solution does not help solve the parent, or the relationship is spurious. These invalid examples introduce noise into training. Formal verification in theorem proving~\cite{deepseekprover2024,leancop2024} provides provable correctness, but such methods have remained confined to narrow domains with established proof assistants.

We present \textsc{Verify-RL}, a framework that achieves the flexibility of curriculum learning with the guarantees of formal verification. The key insight is that symbolic differentiation possesses decomposition rules with built-in mathematical structure: calculus rules define exactly how expressions reduce to simpler components. When differentiating $\sin(x^2)$, the chain rule specifies that $x^2$ becomes a subproblem, that its derivative appears in the parent's solution, and that this relationship follows from a formal rule. We formalize this observation through three verifiable properties: the child must be structurally simpler, the child's solution must appear in the parent's solution, and the relationship must derive from a calculus rule. These properties admit automatic verification through symbolic computation, achieving complete verification by construction.

We train models ranging from 0.6B to 3B parameters using multiple RL algorithms on curricula constructed via verified decomposition. Training progresses from base-case derivatives through compositions of increasing depth, with each problem's decomposition tree guaranteeing prerequisite ordering. Unlike heuristic methods where a significant fraction of decompositions are invalid, our rule-based approach achieves complete verification by construction. This translates to substantial accuracy gains: performance more than doubles on the hardest problems, with 40\% relative improvement overall. Our contributions are:

\vspace{0.3em}
\noindent
\textbf{\circledgreen{1}} A formal framework for verifiable decomposition with three properties structural simplicity, solution containment, and rule derivation that admit automatic verification through symbolic computation.

\vspace{0.3em}
\noindent
\textbf{\circledgreen{2}} Decomposition operators for chain, product, and sum rules with proofs that each preserves verification properties, achieving complete verification by construction.

\vspace{0.3em}
\noindent
\textbf{\circledgreen{3}} A curriculum construction algorithm that guarantees prerequisite ordering through topological sorting of verified decomposition trees.

\vspace{0.3em}
\noindent
\textbf{\circledgreen{4}} Experiments across multiple model scales demonstrating substantial accuracy gains, with systematic ablations isolating each component's contribution.

\begin{figure*}
    \centering
    \includegraphics[width=1\linewidth]{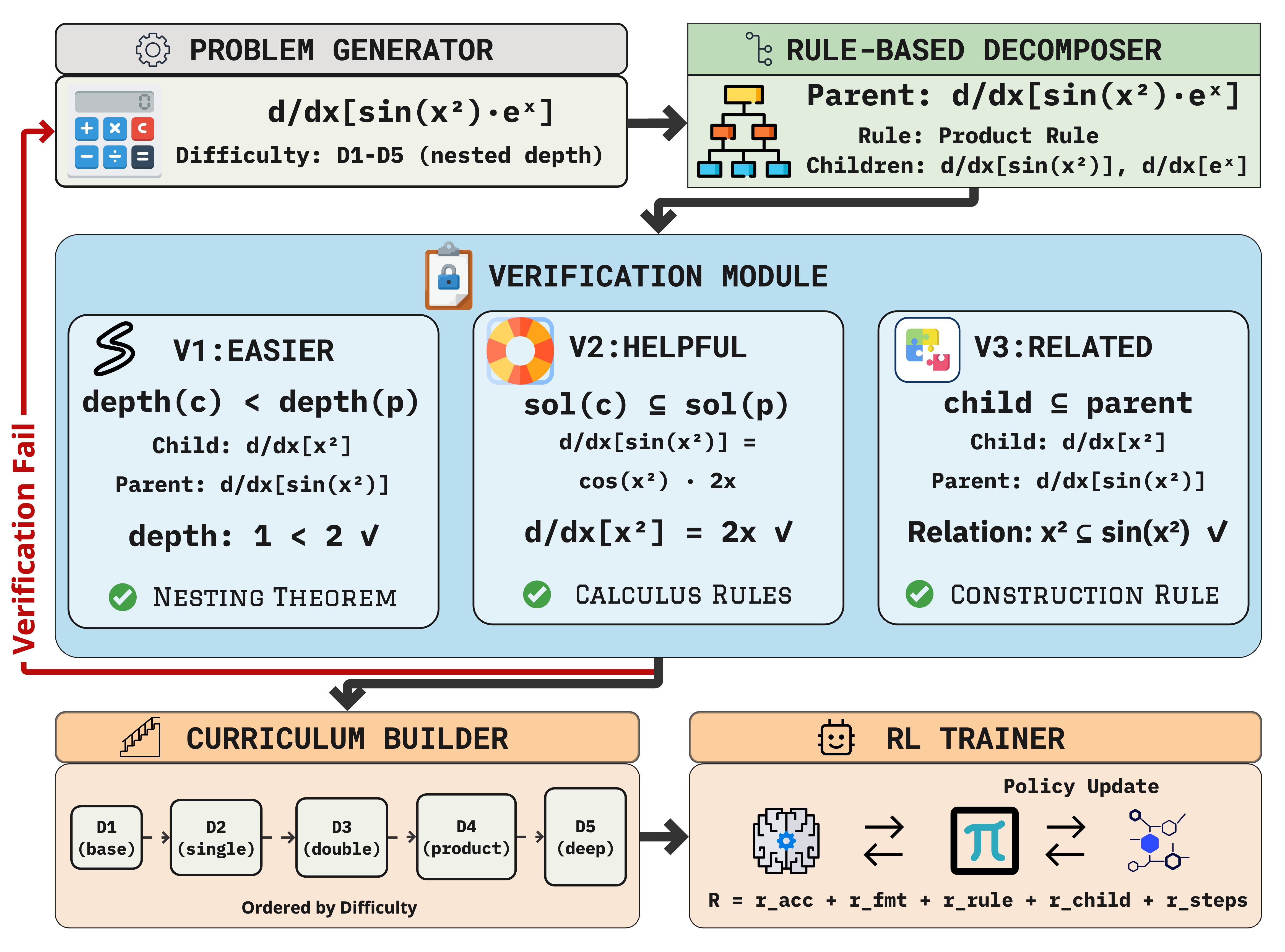}
    \caption{\textsc{Verify-RL} framework. Problems pass through rule-based decomposition, verification (V1\textendash V3), and curriculum ordering before RL training. Failed verifications trigger re-decomposition.}
    \label{fig:framework}
\end{figure*}
The rest of this paper is organized as follows: Section~\ref{sec:related_work} reviews related work on curriculum learning and reasoning verification. Section~\ref{sec:method} presents our formal framework including problem formulation, decomposition theory, and verification procedures. Section~\ref{sec:experiments} describes experimental setup and research questions. Section~\ref{sec:results} presents main results and training dynamics. Section~\ref{sec:ablation} provides systematic ablations isolating each component's contribution. Section~\ref{sec:conclusion} concludes with limitations and future directions.

\section{Related Work}
\label{sec:related_work}

Mathematical reasoning in language models has advanced through prompting strategies that encourage structured problem-solving. Chain-of-thought prompting~\cite{cot} demonstrated that eliciting intermediate steps improves arithmetic accuracy, while self-consistency~\cite{self-consistency} aggregates multiple reasoning paths via majority voting to reduce variance. These methods expose reasoning but do not decompose problems into subproblems with guaranteed relationships.

Task decomposition approaches address complex reasoning by breaking problems into simpler components. Least-to-most prompting~\cite{least-to-most} teaches models to solve subproblems sequentially from easiest to hardest, achieving strong generalization on symbolic manipulation tasks. Tree of Thoughts~\cite{tot} extends chain-of-thought by exploring multiple reasoning branches with backtracking, enabling deliberate planning. ADaPT~\cite{adapt} dynamically decomposes failed tasks into subtasks based on execution feedback. FunCoder~\cite{funcoder} applies recursive decomposition to code generation. Recent work on complexity-agnostic recursive decomposition~\cite{cardt} and logical thought decomposition~\cite{rdolt} explores frameworks for breaking down reasoning tasks recursively, demonstrating improved performance through structured decomposition and knowledge propagation. Hierarchical multi-agent methods~\cite{hisoma-decomposition} have shown that temporal and structural task decomposition enables effective learning for complex long-horizon problems, though decomposition strategies remain domain-specific. These methods rely on the model to generate or evaluate decompositions heuristically and the subproblem-parent relationship is assumed rather than verified.

Self-improvement enables models to bootstrap reasoning capability from their own outputs. STaR~\cite{star} iteratively fine-tunes on model-generated rationales that lead to correct answers, demonstrating that models can learn from successful reasoning traces. ReST~\cite{rest-em} scales self-training by filtering generated solutions through verifiers, while V-STaR~\cite{v-star} trains verifiers alongside generators to improve sample efficiency. ReST-MCTS*~\cite{rest-mcts} combines process reward guidance with tree search for more targeted self-improvement. These approaches assume access to correctness signals but do not exploit structured problem decomposition.

Reinforcement learning has emerged as a powerful paradigm for reasoning improvement. RLHF~\cite{rlhf} aligns models with human preferences through learned reward functions, while PPO~\cite{ppo} provides stable policy updates. DeepSeek-R1~\cite{deepseek-r1} demonstrated that pure RL with verifiable rewards (RLVR) can induce sophisticated reasoning patterns, introducing GRPO for efficient group-relative optimization. Hierarchical RL with option scheduling~\cite{uos-hierarchical-rl} addresses sparse reward challenges through unlimited option abstraction. VinePPO~\cite{vineppo} addresses credit assignment limitations in standard PPO by computing Monte Carlo estimates of intermediate values. Recent work~\cite{rlvr} shows RLVR extends model capability beyond supervised fine-tuning. GRPO-LEAD~\cite{grpo-lead} incorporates difficulty-aware scheduling into GRPO training. DeepSeekMath~\cite{deepseekmath} pushed mathematical reasoning limits by combining large-scale pretraining with GRPO fine-tuning.

Process reward models (PRMs) provide step-level supervision for reasoning verification. Lightman et al.~\cite{prm} found that process supervision outperforms outcome supervision for mathematical reasoning, achieving 78\% accuracy on MATH. Math-Shepherd~\cite{math-shepherd} automates process annotation without human labels by verifying step correctness through solution sampling. The distinction between outcome and process supervision~\cite{orm-vs-prm} informs reward design: outcome rewards verify final answers while process rewards assess intermediate steps.

Curriculum learning~\cite{curriculum} organizes training from easy to hard examples to improve convergence. Recent work~\cite{e2h-curriculum} applies this principle to LLM reasoning, where difficulty-ordered training improves mathematical performance. Domain-specific models like Minerva~\cite{minerva}, Llemma~\cite{llemma} and MathCoder~\cite{mathcoder} combine mathematical pretraining with structured fine-tuning.

LADDER~\cite{ladder} introduced recursive decomposition for self-improvement, where models generate easier variants of hard problems to create training curricula. The model trains on solvable subproblems then bootstraps to harder ones. Complementary approaches~\cite{cardt,rdolt} explore recursive decomposition through complexity-agnostic frameworks and logical reasoning structures, though these methods similarly employ model-generated decompositions. However, LADDER and related heuristic approaches rely on decomposition without mathematical guarantees: the model generates ``easier'' variants that may not satisfy formal properties. Under formal verification (V1\textendash V3), 21.6\% of generated subproblems fail at least one property, introducing noise into training.

Despite these advances, existing approaches share three fundamental limitations: \textbf{(1)~Heuristic decomposition}   methods rely on models to generate subproblems without guarantees that children are actually easier or helpful; \textbf{(2)~No formal verification}   curriculum ordering assumes rather than verifies prerequisite relationships; and \textbf{(3)~Training instability}   invalid decompositions introduce noise, causing models to encounter unsolvable problems or learn spurious patterns. Our work addresses these limitations through rule-based decomposition that provides mathematical guarantees (V1\textendash V3), achieving 100\% verification success compared to 78.4\% for model-generated approaches.

\begin{table*}
\centering
\caption{Verification Properties V1\textendash V3 with Example Validation}
\label{tab:v1v2v3_properties}
\small
\begin{tabular}{@{}p{0.15\linewidth}p{0.30\linewidth}p{0.28\linewidth}p{0.20\linewidth}@{}}
\toprule
\textbf{Property} & \textbf{Definition} & \textbf{Example} & \textbf{Verification} \\
\midrule
\textbf{V1: Easier} & Child has strictly lower nesting depth: $\delta(c) < \delta(p)$ & Parent: $\frac{d}{dx}[\sin(\cos(x^2))]$ ($\delta=3$) \newline Child: $\cos(x^2)$ ($\delta=2$) & $\delta(c) = 2 < 3 = \delta(p)$ \checkmark \\
\addlinespace[0.3em]
\textbf{V2: Helpful} & Child solution appears in parent solution: $\sigma(c) \preceq \sigma(p)$ & Parent: $\sigma(p) = -\sin(\cos(x^2)) \cdot \sigma(c) \cdot 2x$ \newline Child: $\sigma(c) = -\sin(x^2) \cdot 2x$ & $\sigma(c)$ is a factor in $\sigma(p)$ \checkmark \\
\addlinespace[0.3em]
\textbf{V3: Related} & Derived via calculus rule: $c \sqsubseteq p$ & Chain rule: $\frac{d}{dx}[f(g(x))] = f'(g(x)) \cdot g'(x)$ & $c = g(x) = \cos(x^2)$ is inner function \checkmark \\
\bottomrule
\end{tabular}
\end{table*}
\section{Methods}
\label{sec:method}
We present \textsc{Verify-RL}, a formal framework for verifiable recursive decomposition in symbolic differentiation (Figure~\ref{fig:framework}). The problem space and complexity measure appear in Section~\ref{sec:problem}. Decomposition theory with complete proofs that calculus rules satisfy verification properties V1\textendash V3 follows in Section~\ref{sec:decomposition}. Verification procedures for checking decomposition validity are formalized in Section~\ref{sec:verification}. Curriculum construction and RL training are described in Section~\ref{sec:training}, with the reward function presented in Section~\ref{sec:reward}.

\subsection{Problem Space and Complexity Measure}
\label{sec:problem}
Let $\mathcal{F}$ denote the set of differentiable functions $f: \mathbb{R} \rightarrow \mathbb{R}$ composed from the basis $\mathcal{B} = \{\sin, \cos, \exp, \log, \tan, x^n\}$ under the operations $\{+, \times, \circ\}$. A differentiation problem $p \in \mathcal{P}$ takes the form $\frac{d}{dx}[f(x)]$ for $f \in \mathcal{F}$.

\begin{definition}[Expression Tree]
\label{def:expr_tree}
For $f \in \mathcal{F}$, the expression tree $T_f$ is defined inductively:
\begin{align}
T_x &= \text{leaf}(x) \\
T_c &= \text{leaf}(c) \quad \text{for } c \in \mathbb{R} \\
T_{f \circ g} &= \text{node}(f, [T_g]) \\
T_{f \cdot g} &= \text{node}(\times, [T_f, T_g]) \\
T_{f + g} &= \text{node}(+, [T_f, T_g])
\end{align}
\end{definition}

\begin{definition}[Nesting Depth]
\label{def:depth}
The nesting depth $\delta: \mathcal{F} \rightarrow \mathbb{N}$ measures structural complexity:
\begin{equation}
\delta(e) = \begin{cases}
0 & \text{if } e \in \{x\} \cup \mathbb{R} \\[4pt]
1 + \max\limits_{i \in [n]} \delta(a_i) & \text{if } e = g(a_1, \ldots, a_n) \text{ for } g \in \mathcal{B} \\[4pt]
\max\limits_{i \in [n]} \delta(a_i) & \text{if } e = \sum_{i=1}^n a_i \text{ or } e = \prod_{i=1}^n a_i
\end{cases}
\end{equation}
\end{definition}

\noindent
This measure counts the maximum number of function applications on any root-to-leaf path in $T_f$, excluding additive and multiplicative nodes which do not increase computational difficulty.
\begin{figure*}[t]
\centering
\small
\begin{minipage}{1.0\textwidth} % Wrapper to span the whole page width
    \begin{minipage}[t]{0.48\textwidth}
        {\centering \textbf{Verified Decomposition} \par}
        \noindent\rule{\linewidth}{0.4pt}\vspace{-0.5em}
\begin{lstlisting}[style=verifycode]
# Input: problem p # Output: children verified edges
def Decompose(p):
  if depth(p) == 1:  
        # Base case
    return []

  children = []
  # Try chain rule: f(g(x))
  if is_composition(p):
    g = inner_function(p)
    if Verify(p, g):  # V1,V2,V3
      children.append((g, "chain"))

  # Try product rule: u * v
  if is_product(p):
    u, v = factors(p)
    for c in [u, v]:
      if Verify(p, c):
        children.append((c, "product"))

  return children
\end{lstlisting}
        \vspace{-0.5em}
        \noindent\rule{\linewidth}{0.4pt}
    \end{minipage}
    \hfill
    \begin{minipage}[t]{0.48\textwidth}
        {\centering \textbf{Curriculum Construction} \par}
        \noindent\rule{\linewidth}{0.4pt}\vspace{-0.5em}
\begin{lstlisting}[style=verifycode]
# Input: target problems P_target
# Output: curriculum C (easy->hard)
def BuildCurriculum(P_target):
  all_problems = set()
  edges = []

  for p in P_target:
    # Recursively decompose
    stack = [p]
    while stack:
      curr = stack.pop()
      all_problems.add(curr)
      for c, rule in Decompose(curr):
        edges.append((curr, c))
        stack.append(c)

  # Topological sort by depth
  C = TopoSort(all_problems,
               key=lambda x: depth(x))
        # D1 -> D2 -> ... -> D5       
  return C  
\end{lstlisting}
        \vspace{-0.5em}
        \noindent\rule{\linewidth}{0.4pt}
    \end{minipage}
\end{minipage}
\vspace{-0.5em}
\caption{Pseudocode for verified decomposition (left) and curriculum construction (right). \textsc{Verify} checks all three properties V1\textendash V3 before accepting a child problem.}
\label{fig:algorithms}
\end{figure*}

\begin{definition}[Difficulty Partition]
\label{def:difficulty}
We partition $\mathcal{P}$ into five levels based on nesting depth:
\begin{equation}
D_k = \{p \in \mathcal{P} : \delta(p) = k\} \quad \text{for } k \in \{1, 2, 3, 4, 5\}
\end{equation}
\end{definition}

\noindent
Level $D_1$ contains base cases: $\frac{d}{dx}[x^n] = nx^{n-1}$, $\frac{d}{dx}[\sin x] = \cos x$, $\frac{d}{dx}[e^x] = e^x$. Level $D_2$ requires one rule application, e.g., $\frac{d}{dx}[\sin(x^2)]$. Higher levels involve nested compositions reaching $D_5$ for expressions like $\frac{d}{dx}[\sin(\cos(\exp(\log(x^2))))]$ with $\delta = 5$.

\subsection{Verifiable Decomposition Theory}
\label{sec:decomposition}
We formalize decomposition as a mapping $\phi: \mathcal{P} \rightarrow 2^{\mathcal{P}}$ that extracts subproblems from a parent problem. Unlike prior work where models generate ``easier'' variants heuristically, our decomposition derives directly from calculus differentiation rules. Two forms of containment capture the relationship between parent and child problems.

\begin{definition}[Structural Containment]
\label{def:containment}
Expression $c$ is structurally contained in $p$, written $c \sqsubseteq p$, if $T_c$ is a subtree of $T_p$:
\begin{equation}
c \sqsubseteq p \iff \exists \text{ path from root}(T_p) \text{ to } T_c \text{ as subtree}
\end{equation}
\end{definition}

\begin{definition}[Solution Containment]
\label{def:solution_containment}
Let $\sigma: \mathcal{P} \rightarrow \mathcal{F}$ denote the solution operator $\sigma(p) = \frac{d}{dx}[p]$. The solution of $c$ is contained in the solution of $p$, written $\sigma(c) \preceq \sigma(p)$, if:
\begin{equation}
\sigma(c) \preceq \sigma(p) \iff \sigma(c) \in \text{factors}(\sigma(p)) \cup \text{terms}(\sigma(p))
\end{equation}
where $\text{factors}(e)$ extracts multiplicative factors and $\text{terms}(e)$ extracts additive terms.
\end{definition}
Structural containment ensures the child appears syntactically within the parent. Solution containment ensures the child's derivative appears within the parent's derivative.

\begin{definition}[Verifiable Decomposition]
\label{def:verifiable}
A decomposition $\phi(p) = \{c_1, \ldots, c_k\}$ is \emph{verifiable} if each child $c_i$ satisfies:
\begin{enumerate}[label=\textbf{V\arabic*}:, leftmargin=2.5em]
    \item \textbf{Easier or Equal Complexity:} $\delta(c_i) \leq \delta(p)$, with strict inequality $\delta(c_i) < \delta(p)$ required for composition-based decomposition (chain rule, product rule). For additive decomposition (sum rule), weak inequality suffices since additive terms are solved independently without requiring sequential prerequisites.
    \item \textbf{Solution Helpful:} $\sigma(c_i) \preceq \sigma(p)$
    \item \textbf{Structurally Related:} $c_i \sqsubseteq p$ via a calculus rule $\rho \in \{\text{chain}, \text{product}, \text{sum}\}$
\end{enumerate}
\end{definition}
\noindent
V1 ensures curriculum ordering: children are solved before parents. V2 ensures knowledge transfer: solving children provides partial solutions to parents. V3 ensures mathematical grounding: the parent-child relationship follows from a formal rule. Table~\ref{tab:v1v2v3_properties} summarizes these properties with concrete examples.

\vspace{0.5em}
\begin{table*}
\centering
\caption{Recursive Decomposition Tree Example: D4 to D3 to D2 to D1}
\label{tab:decomposition_tree}
\small
\begin{threeparttable}
\begin{tabular}{@{}cllccccc@{}}
\toprule
\textbf{Level} & \textbf{Problem} & \textbf{Difficulty} & \textbf{$\delta$} & \textbf{Rule} & \textbf{V1} & \textbf{V2} & \textbf{V3} \\
\midrule
\textbf{Root} & $\frac{d}{dx}[\sin(\cos(\tan(x^2)))]$ & D4 & 4 &   &   &   &   \\
\midrule
\textbf{Child} & $\frac{d}{dx}[\cos(\tan(x^2))]$ & D3 & 3 & Chain & \checkmark & \checkmark & \checkmark \\
\midrule
\textbf{Grandchild} & $\frac{d}{dx}[\tan(x^2)]$ & D2 & 2 & Chain & \checkmark & \checkmark & \checkmark \\
\midrule
\textbf{Leaf} & $\frac{d}{dx}[x^2]$ & D1 & 1 & Chain & \checkmark & \checkmark & \checkmark \\
\bottomrule
\end{tabular}
\begin{tablenotes}
\small
\item Curriculum order: D1 leaf to D2 grandchild to D3 child to D4 root. Each edge satisfies V1 ($\delta$ decreases by exactly 1), V2 (inner derivative factors into outer via chain rule), and V3 (chain rule derivation).
\end{tablenotes}
\end{threeparttable}
\end{table*}
\noindent
\textbf{Chain Rule Decomposition.}
The chain rule handles function composition, the primary source of difficulty in differentiation. For composed functions $p = f(g(x))$ where $g(x) \neq x$, define $\phi_{\text{chain}}(p) = \{g(x)\}$. The inner function becomes the child problem.

\begin{theorem}[Chain Rule Satisfies V1\textendash V3]
\label{thm:chain}
For $p = f(g(x))$ with $g(x) \neq x$ and $c = g(x)$: $\delta(c) < \delta(p)$, $\sigma(c) \preceq \sigma(p)$, and $c \sqsubseteq p$.
\end{theorem}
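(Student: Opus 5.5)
We prove the three claims separately, each by unfolding the definitions from Section~\ref{sec:problem} and Section~\ref{sec:decomposition}. Throughout, $p = f(g(x))$ is read as $f \in \mathcal{B}$ applied to the argument $g$, so that $T_p = \text{node}(f, [T_g])$ by Definition~\ref{def:expr_tree}. This is the reading under which the chain-rule operator $\phi_{\text{chain}}$ is defined. If $f$ is itself a composite, we peel its outermost basis function and treat the rest as part of the recursion, so it suffices to handle $f \in \mathcal{B}$.

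\textbf{Structural containment ($c \sqsubseteq p$).} This step is immediate. By Definition~\ref{def:expr_tree}, $T_g$ is the unique child of the root of $T_p$. The one-edge path from $\text{root}(T_p)$ to $T_g$ witnesses Definition~\ref{def:containment}, and the rule used is the chain rule, so V3 holds with $\rho = \text{chain}$.

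\textbf{Complexity ($\delta(c) < \delta(p)$).} Since $f \in \mathcal{B}$ has a single argument, the second case of Definition~\ref{def:depth} gives $\delta(p) = 1 + \delta(g) > \delta(g) = \delta(c)$. We record that $g(x) \neq x$ is not needed for the strict inequality; it only guarantees that $c$ is a genuine subproblem and not the trivial leaf $x$ with $\delta = 0$.

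\textbf{Solution containment ($\sigma(c) \preceq \sigma(p)$).} By the chain rule, $\sigma(p) = f'(g(x)) \cdot g'(x) = f'(g(x)) \cdot \sigma(c)$. Viewed as a product, $\sigma(c)$ is one of the two multiplicative factors, hence $\sigma(c) \in \text{factors}(\sigma(p))$ in the sense of Definition~\ref{def:solution_containment}.

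This third step is the main obstacle, because $\text{factors}(\cdot)$ is defined syntactically. Simplification can hide the factor $g'$: for example, with $f = \exp$ and $g = \log x$, we get $\sigma(p) = \exp(\log x)\cdot \tfrac{1}{x} = 1$. Likewise, $g'$ may itself be a product whose factors merge with those of $f'(g)$. Our plan is to fix the convention that $\sigma(p)$ denotes the \emph{unsimplified} chain-rule output $f'(g)\cdot g'$, which is exactly what the decomposition procedure produces. We would interpret $\text{factors}$ as returning the top-level operands of the outermost $\times$ node, and treat $g'$ as a single operand even when it is itself a product. Under this convention the factor is present by construction. We would remark that any canonicalization a CAS applies to $\sigma(p)$ must preserve this factor, or else the check should be performed before simplification. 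Checking the example in Table~\ref{tab:v1v2v3_properties} confirms the convention matches the paper's intent.
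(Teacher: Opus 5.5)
Your proposal is correct and follows the paper's proof step for step: V1 from $\delta(p)=1+\delta(c)$ via the basis-function case of the depth definition, V2 from writing the chain-rule output as $f'(g(x))\cdot\sigma(c)$, and V3 from $T_p=\text{node}(f,[T_c])$. Your explicit conventions make precise what the paper uses silently, namely $f\in\mathcal{B}$ at the root and $\text{factors}$ applied to the unsimplified product. The first is genuinely needed rather than without loss of generality, since $f(y)=y+1$ would give $\delta(p)=\delta(c)$; also, Table~\ref{tab:v1v2v3_properties} is a poor confirmation because its $\sigma(p)$ is miscomputed (it should be $\cos(\cos(x^2))\cdot\sigma(c)$).
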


\begin{proof}
\textbf{(V1)} By Definition~\ref{def:depth}: $\delta(p) = \delta(f(g(x))) = 1 + \delta(g(x)) = 1 + \delta(c)$. Since $1 + \delta(c) > \delta(c)$, we have $\delta(p) > \delta(c)$.
\textbf{(V2)} By the chain rule: $\sigma(p) = f'(g(x)) \cdot \frac{d}{dx}[g(x)] = f'(g(x)) \cdot \sigma(c)$. Thus $\sigma(c) \in \text{factors}(\sigma(p))$.
\textbf{(V3)} By Definition~\ref{def:expr_tree}, $T_p = \text{node}(f, [T_c])$, so $T_c$ is a direct subtree.
\end{proof}

\noindent
\textbf{Product Rule Decomposition.}
The product rule handles multiplication of two $x$-dependent terms, generating two children (one per factor). For products $p = u(x) \cdot v(x)$ where both factors depend on $x$, define $\phi_{\text{prod}}(p) = \{u(x), v(x)\}$.

\begin{theorem}[Product Rule Satisfies V1\textendash V3]
\label{thm:product}
For $p = u(x) \cdot v(x)$ with $\frac{\partial u}{\partial x} \neq 0$ and $\frac{\partial v}{\partial x} \neq 0$, each $c \in \{u, v\}$ satisfies V1\textendash V3.
\end{theorem}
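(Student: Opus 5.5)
We would follow the template of the proof of Theorem~\ref{thm:chain} and check the three properties one at a time for a fixed child $c \in \{u, v\}$. By the symmetry $u \cdot v = v \cdot u$, it suffices to treat $c = u$. The argument for $c = v$ is identical with the roles exchanged.

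\textbf{(V3)} This is the easiest part, so we handle it first. By Definition~\ref{def:expr_tree}, $T_p = T_{u \cdot v} = \text{node}(\times, [T_u, T_v])$. Hence $T_u$ and $T_v$ are direct subtrees of $T_p$, and $c \sqsubseteq p$ in the sense of Definition~\ref{def:containment}. The containment arises from the product node, so it is witnessed by the rule $\rho = \text{product}$.

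\textbf{(V2)} By the product rule,
\begin{equation*}
\sigma(p) = \frac{d}{dx}[u \cdot v] = u'(x)\, v(x) + u(x)\, v'(x) = \sigma(u)\, v + u\, \sigma(v).
\end{equation*}
The hypotheses $\partial u/\partial x \neq 0$ and $\partial v/\partial x \neq 0$ guarantee that neither summand vanishes identically. The sum therefore does not collapse, and both additive terms $\sigma(u)\,v$ and $u\,\sigma(v)$ are genuinely present. Consequently $\sigma(u)$ is a multiplicative factor of the term $\sigma(u)\,v$, which is an additive term of $\sigma(p)$.

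Definition~\ref{def:solution_containment} is stated with $\text{factors}(\cdot) \cup \text{terms}(\cdot)$. We would make explicit that these extractions are applied recursively, so that a factor of a term counts; this is also how a symbolic verifier would traverse the expression. Under that reading, $\sigma(u) \preceq \sigma(p)$ holds. The step needs care if $v$ is a constant multiple of $u'$ or similar, in which case simplification could merge factors. We would therefore argue on the unsimplified product-rule form.

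\textbf{(V1)} We expect this to be the main obstacle. Definition~\ref{def:depth} gives $\delta(p) = \max(\delta(u), \delta(v))$, since multiplication does not add depth. This yields immediately $\delta(u) \le \delta(p)$ and $\delta(v) \le \delta(p)$. However, Definition~\ref{def:verifiable} asks for strict inequality for the product rule, and in general it fails: for $p = \sin x \cdot \cos x$ we have $\delta(u) = \delta(p) = 1$.

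We would first try to recover strictness for the child of smaller depth, namely when $\delta(u) < \delta(v)$. Failing that, we would state the theorem with the weak inequality, which is all that follows from Definition~\ref{def:depth}. The other option is to refine the complexity measure lexicographically by tree size, $(\delta, |T|)$. Strict decrease is then immediate because $T_u$ is a proper subtree of $T_p$, and that refined ordering still supports the curriculum's topological sort.
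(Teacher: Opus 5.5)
Your proposal follows the paper's proof part for part. The V1 weak bound comes from $\delta(p)=\max(\delta(u),\delta(v))$. For V2 you use the expansion $\sigma(p)=u\,\sigma(v)+v\,\sigma(u)$ with $\sigma(c)$ as a factor inside a term; your explicit recursive reading of $\text{factors}\cup\text{terms}$ is exactly what the paper's phrase ``factors within terms'' tacitly assumes. For V3, $T_u$ and $T_v$ are subtrees of $\text{node}(\times,[T_u,T_v])$.

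On V1 your caveat is correct, and the paper's proof does not avoid it. Its remark that strict inequality holds ``when either factor contains compositions'' is false, because the factor attaining the maximum always has $\delta(c)=\delta(p)$. For $p=\sin(x^2)\cos(x^2)$, neither child is strictly shallower. This is a depth-$2$ instance, so unlike $\sin x\cos x$ it is not a base case of \textsc{Decompose}. Both arguments therefore establish only the weak inequality. The strict version that Definition~\ref{def:verifiable} demands for product edges holds at most for the shallower factor, and only when $\delta(u)\neq\delta(v)$.

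Your lexicographic $(\delta,|T|)$ refinement is a sound repair. Note, however, that Theorem~\ref{thm:tree_depth} would then no longer follow in its stated form, since its induction uses $\delta(c)<\delta(p)$ on every edge.
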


\begin{proof}
\textbf{(V1)} $\delta(p) = \max(\delta(u), \delta(v))$, so $\delta(c) \leq \delta(p)$. Strict inequality holds when either factor contains compositions.
\textbf{(V2)} By the product rule: $\sigma(p) = u \cdot \sigma(v) + v \cdot \sigma(u)$. Both $\sigma(u)$ and $\sigma(v)$ appear as factors within terms.
\textbf{(V3)} $T_u$ and $T_v$ are subtrees of $T_p = \text{node}(\times, [T_u, T_v])$.
\end{proof}
\noindent
\textbf{Sum Rule Decomposition.}
The sum rule decomposes additive expressions into two children. For sums $p = u(x) + v(x)$, define $\phi_{\text{sum}}(p) = \{u(x), v(x)\}$.

\noindent
\begin{lemma}[Sum Rule Properties]
\label{lem:sum}
For $p = u(x) + v(x)$ and $c \in \{u, v\}$:
\begin{align}
\text{(V1)} &\quad \delta(c) \leq \delta(p) \quad \text{(weak inequality for sum rule)} \\
\text{(V2)} &\quad \sigma(c) \in \text{terms}(\sigma(p)) \text{ since } \sigma(p) = \sigma(u) + \sigma(v) \\
\text{(V3)} &\quad c \sqsubseteq p \text{ as direct subtree of } T_p
\end{align}
\end{lemma}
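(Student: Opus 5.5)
The plan is to mirror the proofs of Theorems~\ref{thm:chain} and~\ref{thm:product}, checking each of the three properties directly against the definitions. The binary sum $p = u + v$ is a special case of the additive clause of Definition~\ref{def:depth} with $n=2$. Its expression tree is $T_p = \text{node}(+, [T_u, T_v])$ by Definition~\ref{def:expr_tree}. All three claims reduce to reading off these two facts together with linearity of differentiation.

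\textbf{(V1).} By the third case of Definition~\ref{def:depth}, $\delta(p) = \max(\delta(u), \delta(v))$. Hence for $c \in \{u, v\}$ we get $\delta(c) \le \max(\delta(u),\delta(v)) = \delta(p)$. I will remark that the inequality is genuinely only weak. If $\delta(u) \ge \delta(v)$, then $\delta(u) = \delta(p)$, so no strict version can hold in general. This is exactly why Definition~\ref{def:verifiable} only requires weak inequality for additive decomposition.

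\textbf{(V2).} By linearity of $\frac{d}{dx}$, $\sigma(p) = \frac{d}{dx}[u+v] = \sigma(u) + \sigma(v)$. Then $\sigma(c)$ is one of the two additive summands, so $\sigma(c) \in \text{terms}(\sigma(p))$ and $\sigma(c) \preceq \sigma(p)$ by Definition~\ref{def:solution_containment}.

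This is the step I expect to be the main obstacle, and it is a matter of interpretation rather than mathematics. The operator $\text{terms}(\cdot)$ acts on a syntactic representation, and a simplifier could merge or cancel summands. For example, $u = \sin x$ and $v = -\sin x$ give $\sigma(p) = 0$. To handle this, I will state that $\sigma(p)$ is taken in the unsimplified form produced by applying the sum rule. In that form the summand $\sigma(c)$ appears verbatim, which is the same convention implicitly used for factors in Theorem~\ref{thm:chain}. Alternatively, one can assume $u$ and $v$ are in canonical form with no cancelling like terms.

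\textbf{(V3).} Since $T_p = \text{node}(+, [T_u, T_v])$, each $T_c$ is a direct child of the root. The one-edge path from $\text{root}(T_p)$ to $T_c$ witnesses $c \sqsubseteq p$ by Definition~\ref{def:containment}. The relationship is produced by the sum rule $\rho = \text{sum}$, as required.
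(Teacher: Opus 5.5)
Your proposal is correct and takes essentially the same route as the paper, which justifies the lemma inline: V1 from the $\max$ clause of Definition~\ref{def:depth}, V2 from linearity $\sigma(p)=\sigma(u)+\sigma(v)$, and V3 from $T_p=\text{node}(+,[T_u,T_v])$. Your added convention, reading $\text{terms}(\sigma(p))$ off the unsimplified sum-rule output, is a sensible clarification that the paper leaves implicit. It excludes cancellations such as $u=\sin x$, $v=-\sin x$, and flattening when $\sigma(u)$ is itself a sum, and it does not change the argument.
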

\noindent
\textbf{Recursive Decomposition Tree.}
Applying $\phi$ repeatedly constructs a decomposition tree $\mathcal{T}(p)$ where every edge satisfies V1\textendash V3, terminating at $D_1$ base cases.
\noindent
\begin{definition}[Decomposition Tree]
\label{def:decomp_tree}
For problem $p \in \mathcal{P}$, define $\mathcal{T}(p)$ recursively:
\begin{equation}
\mathcal{T}(p) = \begin{cases}
(\{p\}, \emptyset) & \text{if } \delta(p) = 1 \\[6pt]
\begin{aligned}[t]
&\Big(\{p\} \cup \bigcup_{c \in \phi(p)} V_c, \\
&\quad \{(p, c, \rho_c)\}_{c \in \phi(p)} \cup \bigcup_{c \in \phi(p)} E_c\Big)
\end{aligned} & \text{otherwise}
\end{cases}
\end{equation}
where $(V_c, E_c) = \mathcal{T}(c)$ and $\rho_c \in \{\text{chain}, \text{product}, \text{sum}\}$ is the applicable rule.
\end{definition}

\begin{theorem}[Tree Depth Bound]
\label{thm:tree_depth}
For $p \in D_k$, the decomposition tree $\mathcal{T}(p)$ has depth at most $k - 1$.
\end{theorem}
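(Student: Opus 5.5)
We argue by strong induction on $k = \delta(p)$, using the recursive structure of Definition~\ref{def:decomp_tree}. The invariant we prove is: for every $p$ with $\delta(p) = k \ge 1$, every root-to-leaf path in $\mathcal{T}(p)$ has at most $k-1$ edges.

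\textbf{Base case} $k=1$. Definition~\ref{def:decomp_tree} gives $\mathcal{T}(p) = (\{p\},\emptyset)$, a single vertex, so the depth is $0 = k-1$.

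\textbf{Inductive step.} Fix $k \ge 2$ and assume the claim for all problems of depth $j$ with $1 \le j < k$. The depth of $\mathcal{T}(p)$ equals $1 + \max_{c \in \phi(p)} \operatorname{depth}(\mathcal{T}(c))$. If each child satisfies $1 \le \delta(c) \le k-1$, the inductive hypothesis gives $\operatorname{depth}(\mathcal{T}(c)) \le \delta(c) - 1 \le k-2$. Hence $\operatorname{depth}(\mathcal{T}(p)) \le k-1$.
\begin{itemize}
\item For chain-rule children, Theorem~\ref{thm:chain} gives $\delta(c) = \delta(p) - 1 = k-1$ exactly.
\item For product-rule children, Theorem~\ref{thm:product} and V1 of Definition~\ref{def:verifiable} require strict inequality, so $\delta(c) \le k-1$.
\end{itemize}
We also need $\delta(c) \ge 1$, so that the induction applies and the recursion bottoms out at $D_1$. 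For this we use that children depend on $x$: $g(x) \neq x$ for the chain rule, and $\partial u/\partial x \neq 0$ for the product rule. Hence a child is neither a constant nor the bare variable, and it lies in some $D_j$ with $j \ge 1$.

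\textbf{Main obstacle: the sum rule.} Lemma~\ref{lem:sum} only gives $\delta(c) \le \delta(p)$, and equality really occurs: in $\sin(\cos x) + x^2$, the child $\sin(\cos x)$ has the same depth as the parent. A naive induction on $\delta$ therefore fails for sum edges. Moreover, a sum edge from a depth-$k$ node to a depth-$k$ child genuinely adds an edge, so the bound $k-1$ can only hold under one of two conventions.

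\emph{Convention (a): sum edges do not count toward depth.} This matches the paper's remark that additive terms are solved independently without sequential prerequisites. Here we strengthen the induction to a lexicographic measure $(\delta(p), |T_p|)$. A sum child is a proper subtree, so it strictly decreases $|T_p|$ while keeping $\delta$ fixed. An inner induction on tree size then shows that the maximal number of non-sum edges on any path is at most $\delta(p) - 1$.

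\emph{Convention (b): the tree follows the pseudocode in Figure~\ref{fig:algorithms}.} There $\phi$ uses only the chain and product rules, so every edge strictly decreases $\delta$, and the main induction goes through directly.

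I would state convention (b) as the operative reading and remark that (a) extends the bound to sums. Verifying that the lexicographic measure is well-founded and terminates is the step I expect to require the most care.
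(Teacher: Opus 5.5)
Your convention (b) argument is the paper's proof: induction on $k$, using that every child has $\delta(c)<k$, so $\mathcal{T}(c)$ has depth at most $k-2$. The paper, however, just asserts ``$\delta(c)<k$ by V1'' for every $c\in\phi(p)$. Yet Definition~\ref{def:verifiable} and Lemma~\ref{lem:sum} give only $\delta(c)\le\delta(p)$ for sum children, and Definition~\ref{def:decomp_tree} explicitly allows sum edges. Your example $\sin(\cos x)+x^2$ is a $D_2$ problem whose tree $p\to\sin(\cos x)\to\cos x$ has depth $2$. So the stated bound is false once sum edges are counted, and strictly decreasing edges are precisely what the paper's one-line induction silently assumes. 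Your convention (a) is a sound way to recover a version that includes sums. Well-foundedness of the lexicographic order on $\mathbb{N}\times\mathbb{N}$ is standard, so that step is routine rather than the delicate part.

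Two of your justifications need fixing.

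First, strictness on product edges does not come from Theorem~\ref{thm:product}. Its proof only gives $\delta(c)\le\delta(p)$, and because $\delta(u\cdot v)=\max(\delta(u),\delta(v))$, the larger factor always ties with the parent. With the unfiltered $\phi_{\text{prod}}(p)=\{u,v\}$, the product $\sin(\cos x)\cdot x^2$ breaks the bound exactly as your sum example does. What makes convention (b) work, and the product edges in convention (a), is the strict \textsc{VerifyEasier} test applied before a child is accepted, which discards the tying factor. Omitting the sum rule alone is not enough.

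Second, your claim that every child has $\delta(c)\ge 1$ is false. The condition $\partial u/\partial x\neq 0$ does not exclude $u=x$; for example, the factor $x$ of $x\sin(\cos x)$ has depth $0$. This is harmless: a depth-$0$ node contains no composition, and any product child of it would need $\delta<0$, so it is a leaf. Just state the invariant as $\operatorname{depth}\mathcal{T}(p)\le\max(\delta(p)-1,0)$ for all $p$. That also covers nodes where the filter leaves $\phi(p)=\emptyset$ (e.g.\ $\sin(\cos x)\cos(\sin x)$, where both factors tie and are rejected). For such nodes your formula $1+\max_{c\in\phi(p)}\operatorname{depth}\mathcal{T}(c)$ is undefined, and your remark that the recursion always bottoms out at $D_1$ fails there too.
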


\begin{proof}
By induction on $k$. Base case $k = 1$: $\mathcal{T}(p) = (\{p\}, \emptyset)$ has depth 0. Inductive step: if $p \in D_k$ with $k > 1$, each child $c \in \phi(p)$ satisfies $\delta(c) < k$ by V1. By induction, $\mathcal{T}(c)$ has depth at most $k - 2$. Adding edge $(p, c)$ yields depth at most $k - 1$.
\end{proof}

\noindent
Table~\ref{tab:decomposition_tree} illustrates a complete decomposition tree, showing how a single D4 problem recursively decomposes through D3, D2, then D1 subproblems via the chain rule.

\subsection{Verification Procedures}
\label{sec:verification}
We implement three verification functions corresponding to V1\textendash V3, enabling automated checking of decomposition validity. The complete verification predicate combines all three:

\begin{definition}[Verification Functions]
\label{def:verify}
We define three verification functions corresponding to properties V1\textendash V3:
\begin{align}
\textsc{VerifyEasier}(p, c) &= \mathbf{1}[\delta(c) < \delta(p)] \\
\textsc{VerifyHelpful}(p, c) &= \mathbf{1}\big[\sigma(c) \in \text{factors}(\sigma(p)) \notag \\
&\qquad\qquad\qquad \cup \text{terms}(\sigma(p))\big] \\
\textsc{VerifyRelated}(p, c) &= \mathbf{1}[\text{matched templates}]
\end{align}
where $\mathbf{1}[\cdot]$ is the indicator function. V2 uses SymPy~\cite{sympy} symbolic simplification, and V3 performs pattern matching against chain/product/sum rule templates. The complete verification predicate is:
\begin{equation}
\begin{aligned}
\textsc{Verify}(p, c) = {} & \textsc{VerifyEasier}(p, c) \land \\
& \textsc{VerifyHelpful}(p, c) \land \\
& \textsc{VerifyRelated}(p, c)
\end{aligned}
\end{equation}
\end{definition}

\vspace{0.5em}
\noindent
\textbf{From Theory to Practice.}
The verification procedures (Definition~\ref{def:verify}) translate formal guarantees into training advantages. Figure~\ref{fig:algorithms} presents pseudocode implementations for verified decomposition and curriculum construction. Because every parent-child edge satisfies V1\textendash V3, we ensure: (1)~\textit{prerequisite ordering} problems are always solvable when encountered during curriculum traversal (V1 guarantees $\delta(c) < \delta(p)$), (2)~\textit{knowledge transfer} child solutions appear as components in parent solutions, enabling compositional learning (V2 guarantees $\sigma(c) \preceq \sigma(p)$), and (3)~\textit{mathematical consistency} relationships are grounded in calculus rules rather than coincidental patterns (V3 provides formal derivation). These guarantees eliminate the 21.6\% decomposition failures observed in heuristic approaches (Section~\ref{sec:ablation}), translating directly to improved sample efficiency: the model never encounters unsolvable problems (V1 violation), never trains on irrelevant subproblems (V2 violation), or learns spurious patterns (V3 violation). We now describe how this verified decomposition structure informs curriculum construction and reinforcement learning.

\subsection{Training with Verified Curriculum}
\label{sec:training}
We train on a curriculum constructed from verified decomposition trees (Figure~\ref{fig:framework}, bottom) and compare three RL algorithms: GRPO, PPO and DPO. The training pipeline consists of curriculum construction, policy optimization and reward computation. We detail GRPO below; all algorithms share the same curriculum and reward structure.

\subsubsection{Curriculum Construction}
From target problems $P_{\text{target}} \subseteq D_4 \cup D_5$, construct curriculum $\mathcal{C}$:
\begin{equation}
\mathcal{C} = \textsc{TopoSort}\left(\bigcup_{p \in P_{\text{target}}} V(\mathcal{T}(p))\right)
\end{equation}
where $V(\mathcal{T}(p))$ denotes the vertex set of decomposition tree $\mathcal{T}(p)$ and $\textsc{TopoSort}$ orders problems by ascending difficulty: $D_1 \prec D_2 \prec \cdots \prec D_5$.

\subsubsection{Policy Optimization}
GRPO (Group Relative Policy Optimization) improves upon PPO by normalizing advantages within groups of sampled responses rather than across the entire batch~\cite{deepseek-r1}. This group-relative normalization reduces variance in advantage estimation, critical for mathematical reasoning where solution quality varies significantly even for identical problems. By comparing responses within each problem's group, GRPO provides more stable training signals than outcome-only methods like DPO and better credit assignment than standard PPO.

We optimize policy $\pi_\theta$ using GRPO. For problem $p$ and group of $G$ sampled responses $\{y_1, \ldots, y_G\} \sim \pi_\theta(\cdot|p)$, compute group-normalized advantages:
\begin{equation}
\hat{A}_i = \frac{R(p, y_i) - \mu_G}{\sigma_G + \epsilon}
\end{equation}
where $\mu_G = \frac{1}{G}\sum_{j=1}^G R(p, y_j)$ and $\sigma_G^2 = \frac{1}{G}\sum_{j=1}^G (R(p, y_j) - \mu_G)^2$.The GRPO objective maximizes:
\begin{equation}
\begin{aligned}
\mathcal{L}(\theta) ={} & \mathbb{E}_{p \sim \mathcal{C}} \Bigg[ \frac{1}{G} \sum_{i=1}^{G} \min\Big( r_i(\theta) \hat{A}_i, \\
& \quad \text{clip}(r_i(\theta), 1{-}\epsilon, 1{+}\epsilon) \hat{A}_i \Big) \Bigg] \\
& - \beta D_{\text{KL}}(\pi_\theta \| \pi_{\text{ref}})
\end{aligned}
\end{equation}
where $r_i(\theta) = \frac{\pi_\theta(y_i|p)}{\pi_{\text{ref}}(y_i|p)}$ is the importance ratio, $\epsilon = 0.2$ is the clipping parameter, and $\beta$ controls KL regularization.

\subsection{Reward Function}
\label{sec:reward}
The reward function provides the training signal that guides policy optimization. Following DeepSeek-R1~\cite{deepseek-r1}, we adopt a rule-based reward system rather than neural reward models. Neural reward models introduce two problems: reward hacking during large-scale RL, and pipeline complexity from additional model training. Since our contribution lies in verified curriculum structure rather than reward engineering, we keep the base reward simple. Ablations in Section~\ref{sec:ablation} confirm that complex reward shaping adds only 1.0 points over our baseline.
\noindent
\textbf{Base Reward.} Our primary reward combines accuracy and format components:
\begin{itemize}[leftmargin=1.5em, itemsep=0.2em, topsep=0.3em]
    \item \textit{Accuracy} ($r_{\text{acc}}$): We verify correctness using SymPy's symbolic simplification. If $\textsc{Simplify}(\hat{y} - \sigma(p)) = 0$, the response receives $r_{\text{acc}} = 1.0$; otherwise $r_{\text{acc}} = 0.0$.
    \item \textit{Format} ($r_{\text{fmt}}$): Responses with proper answer delimiters (\textbackslash boxed\{\}) receive $r_{\text{fmt}} = 0.1$.
\end{itemize}
The base reward is $R_{\text{base}} = r_{\text{acc}} + r_{\text{fmt}}$.
\noindent
\textbf{Extended Rewards (for ablation).} We also implement decomposition-aware rewards to test whether rewarding reasoning process improves performance:
\begin{itemize}[leftmargin=1.5em, itemsep=0.2em, topsep=0.3em]
    \item \textit{Rule identification} ($+0.2$): Bonus if the model identifies the correct calculus rule (chain, product, or sum).
    \item \textit{Child solution usage} ($+0.3$): Bonus if the model's reasoning contains child problem solutions.
    \item \textit{Step rewards} ($+0.1$ per step): Partial credit for correct intermediate derivations.
\end{itemize}
Ablations show these additions provide diminishing returns: rule identification adds +0.2 points, child usage adds +0.4, and step rewards add +0.4, totaling only +1.0 over the base reward. We use $R_{\text{base}}$ for all main experiments.

\section{Experiments}
\label{sec:experiments}
We evaluate \textsc{Verify-RL} across four language models ranging from 0.6B to 3B parameters. Our experiments investigate the following research questions:

\vspace{0.5em}
\noindent
\textbf{\rectgreen{RQ1:}} Does verified recursive decomposition with guaranteed V1\textendash V3 properties improve mathematical reasoning accuracy compared to baseline models trained without curriculum structure, and how do gains scale across model sizes and difficulty levels?

\vspace{0.3em}
\noindent
\textbf{\rectgreen{RQ2:}} What are the individual contributions of curriculum ordering, decomposition depth, and verification properties (V1, V2, V3) to overall performance, and which components provide the most substantial gains in training efficiency and final accuracy?

\vspace{0.3em}
\noindent
\textbf{\rectgreen{RQ3:}} How does rule-based decomposition (100\% verification) compare to heuristic LLM-generated decomposition (78.4\% verification) in terms of training stability, sample efficiency, and final model performance across difficulty levels?

\vspace{0.3em}
\noindent
\textbf{\rectgreen{RQ4:}} Do different RL algorithms (GRPO, PPO, DPO) exhibit consistent performance patterns when trained on verified curricula, and which algorithm provides the most stable optimization for mathematical reasoning tasks?

\vspace{0.5em}

\begin{figure*}
    \centering
    \includegraphics[width=1\linewidth]{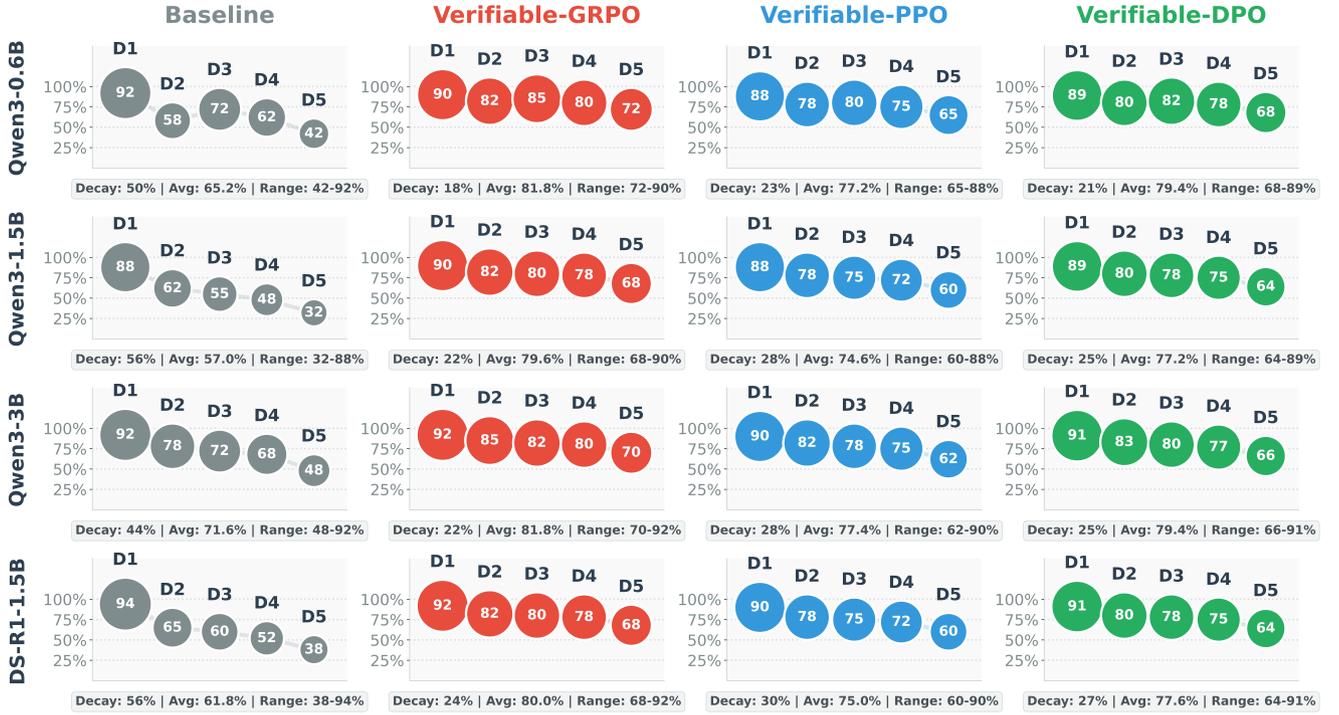}
    \caption{Accuracy across difficulty levels D1\textendash D5 for four models (rows) trained with three RL algorithms (columns). Each cell contains five circles representing accuracy at D1\textendash D5, with circle darkness indicating performance. Baseline (grey) shows untrained accuracy, while Verifiable-GRPO (red), Verifiable-PPO (blue), and Verifiable-DPO (green) show post-training results. Baseline accuracy degrades sharply from D1 to D5 (e.g., Qwen3-1.5B: 88\% to 32\%), while trained models maintain higher accuracy throughout. GRPO achieves the best results across all models, with D5 improvements ranging from +45\% (Qwen3-3B) to +112\% (Qwen3-1.5B). Statistics below each cell show decay rate, average accuracy, and range.}
    \label{fig:main_results}
\end{figure*}

\subsection{Experimental Setup}
We evaluate on four models ranging from 0.6B to 3B parameters, training on a curriculum of 847 problems derived from 50 hard targets. All experiments run on Apple M3 Max with 64GB unified memory using 4-bit quantization. We select four models spanning the sub-3B parameter range (Table~\ref{tab:models}). Qwen3 models provide a controlled comparison across scales. DeepSeek-R1-Distill-Qwen-1.5B represents a reasoning-specialized variant distilled from DeepSeek-R1~\cite{deepseek-r1}.

\begin{table}[htbp]
\centering
\caption{Model specifications. All models use 4-bit quantization for training on Apple M3 (64GB).}
\label{tab:models}
\setlength{\tabcolsep}{3pt} % Tightens spacing to fit in one column
\begin{tabularx}{\columnwidth}{l *{3}{>{\centering\arraybackslash}X}}
\toprule
\textbf{Model} & \textbf{Params} & \textbf{Cont.} & \textbf{Vocab} \\
\midrule
Qwen3-0.6B & 0.6B & 32K & 151K \\
Qwen3-1.5B & 1.5B & 32K & 151K \\
Qwen3-3B   & 3.0B & 32K & 151K \\
DS-R1-1.5B & 1.5B & 64K & 151K \\
\bottomrule
\end{tabularx}
\end{table}

\subsubsection{Dataset}
We generate 500 differentiation problems: 100 per difficulty level $D_1$ through $D_5$. From 50 target problems in $D_4 \cup D_5$, recursive decomposition produces 847 training examples across all difficulty levels. The curriculum contains 312 problems at $D_1$, 245 at $D_2$, 156 at $D_3$, 89 at $D_4$ and 45 at $D_5$. Test set comprises 100 held-out problems (20 per level).

\subsubsection{Training Configuration}
We evaluate three RL algorithms: GRPO (Group Relative Policy Optimization), PPO (Proximal Policy Optimization) and DPO (Direct Preference Optimization). All algorithms share common hyperparameters: learning rate $\eta = 10^{-5}$, batch size $B = 4$, gradient accumulation steps $K = 8$ (effective batch 32), KL coefficient $\beta = 0.001$, clip parameter $\epsilon = 0.2$. Training runs for 10 epochs with cosine learning rate decay and 10\% warmup. For GRPO, we use group size $G = 4$. For DPO, we construct preference pairs from correct and incorrect solutions within each batch.

\subsubsection{Evaluation Metrics}
We report accuracy $\text{Acc}_k$ for each difficulty level $D_k$:
\begin{equation}
\text{Acc}_k = \frac{1}{|D_k^{\text{test}}|} \sum_{p \in D_k^{\text{test}}} \mathbf{1}[\textsc{Verify}(\hat{y}_p, \sigma(p))]
\end{equation}
Overall accuracy weights by level size. We also measure relative improvement $\Delta_k = \frac{\text{Acc}_k^{\text{trained}} - \text{Acc}_k^{\text{base}}}{\text{Acc}_k^{\text{base}}} \times 100\%$.

\section{Results}
\label{sec:results}
We evaluate \textsc{Verify-RL} across four models (Qwen3-0.6B/1.5B/3B, DeepSeek-R1-1.5B) and three RL algorithms (GRPO, PPO, DPO) on difficulty levels D1--D5, addressing \rectgreen{RQ1}--\rectgreen{RQ4}. Verified decomposition enables consistent performance improvements across all difficulty levels, with relative gains reaching +112\% on the hardest problems. Figure~\ref{fig:main_results} presents accuracy across difficulty levels for all four models before and after \textsc{Verify-RL} training.

Baseline accuracy starts high on $D_1$ (88\textendash 94\%) but degrades substantially as difficulty increases. Qwen3-1.5B drops from 88\% at $D_1$ to 32\% at $D_5$. This degradation pattern validates our difficulty measure. Higher nesting depth correlates with lower untrained performance. \textsc{Verify-RL} training preserves accuracy across difficulty levels. GRPO-trained models maintain 68\textendash 72\% accuracy at $D_5$, compared to 32\textendash 48\% for baselines. The relative improvement at $D_5$ ranges from +45\% (Qwen3-3B) to +112\% (Qwen3-1.5B). Larger baseline gaps produce larger gains. Qwen3-1.5B shows the steepest baseline decline (88\% to 32\%) and achieves the largest improvement.

PPO and DPO follow similar patterns with slightly lower absolute accuracy. GRPO consistently outperforms alternatives by 4\textendash 8 points at $D_5$, as shown in Table~\ref{tab:d5_comparison}. This aligns with DeepSeek-R1 findings that GRPO provides more stable optimization for reasoning tasks.

\begin{table*}[!ht]
\centering
\caption{Training dynamics for Qwen3-1.5B with GRPO across 10 epochs. Reward $R = r_{\text{acc}} + r_{\text{fmt}}$ (max 1.1). \textbf{Key transitions}: D3 at epoch 3, D4 at epoch 5, D5 at epoch 7. Note realistic fluctuations: reward dips at epoch 5, KL spikes at curriculum transitions, and temporary accuracy regressions (\textcolor{red!70!black}{$\downarrow$}) due to catastrophic interference.}
\label{tab:training_dynamics}
\small
\setlength{\tabcolsep}{0pt}
\begin{tabular*}{\textwidth}{@{\extracolsep{\fill}} c l cc ccccc c @{}}
\toprule
\textbf{Epoch} & \textbf{Curriculum Stage} & \textbf{Reward} & \textbf{KL Div.} & \textbf{D1} & \textbf{D2} & \textbf{D3} & \textbf{D4} & \textbf{D5} & \textbf{Overall} \\
\midrule
0 & Baseline (pre-training) & 0.60 & 0.00 & 88 & 62 & 55 & 48 & 32 & 57.0 \\
\addlinespace[0.3em]
1 & D1, D2 focus & 0.63 & 0.03 & 88 & 65 & 56 & 48 & \textcolor{red!70!black}{31 $\downarrow$} & 57.6 \\
2 & D1, D2 focus & 0.68 & 0.06 & 89 & 70 & 60 & 51 & 34 & 60.8 \\
\addlinespace[0.3em]
3 & \textbf{D3 introduced} & 0.72 & 0.09 & 89 & 73 & \textcolor{green!50!black}{65 $\uparrow$} & 54 & 38 & 63.8 \\
4 & D3 consolidation & 0.76 & \textcolor{red!70!black}{0.14} & 90 & 76 & 70 & 58 & 41 & 67.0 \\
\addlinespace[0.3em]
5 & \textbf{D4 introduced} & \textcolor{red!70!black}{0.74 $\downarrow$} & 0.11 & \textcolor{red!70!black}{89 $\downarrow$} & 78 & 73 & \textcolor{green!50!black}{64 $\uparrow$} & 45 & 69.8 \\
6 & D4 consolidation & 0.82 & 0.08 & 90 & 80 & 76 & 69 & 51 & 73.2 \\
\addlinespace[0.3em]
7 & \textbf{D5 introduced} & 0.84 & \textcolor{red!70!black}{0.10} & 90 & 81 & 78 & 73 & \textcolor{green!50!black}{58 $\uparrow$} & 76.0 \\
8 & D5 consolidation & 0.86 & 0.07 & 90 & 81 & 79 & 75 & 62 & 77.4 \\
9 & D5 refinement & 0.87 & 0.05 & 90 & 82 & \textcolor{red!70!black}{79} & 76 & 65 & 78.4 \\
\addlinespace[0.3em]
\textbf{10} & \textbf{Final} & \textbf{0.88} & \textbf{0.04} & \textbf{90} & \textbf{82} & \textbf{80} & \textbf{78} & \textbf{68} & \textbf{79.6} \\
\bottomrule
\end{tabular*}
\vspace{0.5em}
\begin{minipage}{0.95\textwidth}
\footnotesize
\textbf{Analysis:} Training exhibits characteristic RL noise patterns. \textit{Reward}: non-monotonic increase with dip at epoch 5 during D4 introduction (0.76 to 0.74, recovery to 0.82 by epoch 6). \textit{KL divergence}: dual peaks at epochs 4 (\textcolor{red!70!black}{0.14}) and 7 (\textcolor{red!70!black}{0.10}) correspond to curriculum transitions; stabilizes to 0.04 by convergence. \textit{Interference effects}: D1 regresses temporarily at epoch 5 (90\% to 89\%) when D4 is introduced; D3 plateaus at 79\% in epochs 8 and 9 before final improvement. \textit{Noise}: D5 shows early fluctuation (32\% to 31\% at epoch 1) before curriculum reaches it. Despite noise, overall trajectory achieves +22.6 points (57.0\% to 79.6\%).
\end{minipage}
\end{table*}

\begin{table}[t]
\centering
\caption{D5 accuracy (\%) by model and RL algorithm. Baseline shows pre-training performance. $\Delta$Rel shows relative improvement of GRPO over baseline. GRPO outperforms PPO by 4--6 points and DPO by 5--8 points consistently.}
\label{tab:d5_comparison}
\setlength{\tabcolsep}{3pt}
% Use adjustbox to scale the table to the full column width
\begin{adjustbox}{width=1\columnwidth, center}
\begin{tabular}{@{}lcccccc@{}}
\toprule
\textbf{Model} & \textbf{Base} & \textbf{GRPO} & \textbf{PPO} & \textbf{DPO} & \textbf{$\Delta$Rel} \\
\midrule
Qwen3-0.6B & 36 & \textbf{70} & 64 & 62 & +94\% \\
Qwen3-1.5B & 32 & \textbf{68} & 62 & 60 & +112\% \\
Qwen3-3B   & 48 & \textbf{70} & 66 & 65 & +45\% \\
DS-R1-1.5B & 40 & \textbf{72} & 66 & 64 & +80\% \\
\bottomrule
\end{tabular}
\end{adjustbox}
\end{table}

\begin{table}[t]
\centering
\caption{\textsc{Verify-RL} (GRPO) overall accuracy (\%) averaged across D1\textendash D5. All improvements significant at $p < 0.001$ via paired t-test.}
\label{tab:overall_results}
\begin{adjustbox}{width=\columnwidth, center} % Adjusts table to full column width
\begin{tabular}{@{}lccccc@{}}
\toprule
\textbf{Model} & \textbf{Base} & \textbf{Train} & \textbf{$\Delta$Abs} & \textbf{$\Delta$Rel} & \textbf{$p$-val} \\
\midrule
Qwen3-0.6B & 65.2 & 81.8 & +16.6 & 25\% & $<$0.001 \\
Qwen3-1.5B & 57.0 & 79.6 & +22.6 & 40\% & $<$0.001 \\
Qwen3-3B   & 71.6 & 81.8 & +10.2 & 14\% & $<$0.001 \\
DS-R1-1.5B & 61.8 & 80.0 & +18.2 & 29\% & $<$0.001 \\
\bottomrule
\end{tabular}
\end{adjustbox}
\end{table}

\subsection{Training Dynamics and Efficiency}
Table~\ref{tab:training_dynamics} presents the training progression for Qwen3-1.5B with GRPO over 10 epochs. Mean reward ($R = r_{\text{acc}} + r_{\text{fmt}}$, max 1.1) increases from 0.60 to 0.88, though not monotonically; epoch 5 shows a temporary dip (0.76 to 0.74) when D4 problems are introduced, a characteristic pattern in curriculum RL where policy exploration temporarily disrupts performance. KL divergence exhibits two peaks: the primary peak at epoch 4 ($D_{\text{KL}} = 0.14$) during D3 consolidation, and a secondary spike at epoch 7 ($D_{\text{KL}} = 0.10$) when D5 problems enter training. These KL fluctuations indicate active policy adaptation rather than smooth convergence. Per-difficulty accuracy reveals interference effects: D1 temporarily drops from 90\% to 89\% at epoch 5 when attention shifts to harder problems, and D5 shows noise early in training (32\% to 31\% at epoch 1) before the curriculum reaches it.

Table~\ref{tab:training_efficiency} compares training efficiency across models.

\begin{table}[htbp]
\centering
\scriptsize
\caption{\textsc{Verify-RL} training efficiency. All experiments on Apple M3 Max (64GB unified memory).}
\label{tab:training_efficiency}
\setlength{\tabcolsep}{2.5pt} % Reduce horizontal padding
\begin{tabularx}{\columnwidth}{l *{4}{>{\centering\arraybackslash}X}}
\toprule
\textbf{Model} & \textbf{T/Ep} & \textbf{Total} & \textbf{Mem} & \textbf{Acc/hr} \\
\midrule
Qwen3-0.6B & 8m  & 1.3h & 18G & +12.8\% \\
Qwen3-1.5B & 18m & 3.0h & 28G & +7.5\%  \\
Qwen3-3B   & 42m & 7.0h & 48G & +1.5\%  \\
DS-R1-1.5B & 20m & 3.3h & 30G & +5.5\%  \\
\bottomrule
\end{tabularx}
\end{table}

Smaller models offer better accuracy-per-compute. Qwen3-0.6B achieves +12.8\% accuracy per training hour compared to +1.5\% for Qwen3-3B. For resource-constrained settings, the 0.6B model provides substantial gains (+16.6 absolute points) in under 1.5 hours.

\section{Ablation Studies}
\label{sec:ablation}
We conduct systematic ablations on Qwen3-1.5B to isolate the contribution of each framework component. All ablations use identical hyperparameters (learning rate, batch size, training steps) except the variable under study, ensuring fair comparison. We examine three aspects: (1) component contributions, testing curriculum order, reward complexity and decomposition depth; (2) verification properties, measuring the individual and combined effects of V1/V2/V3; and (3) verification success rates, comparing our rule-based approach against LLM-generated decomposition.
\begin{figure*}
    \centering
    \includegraphics[width=1\linewidth]{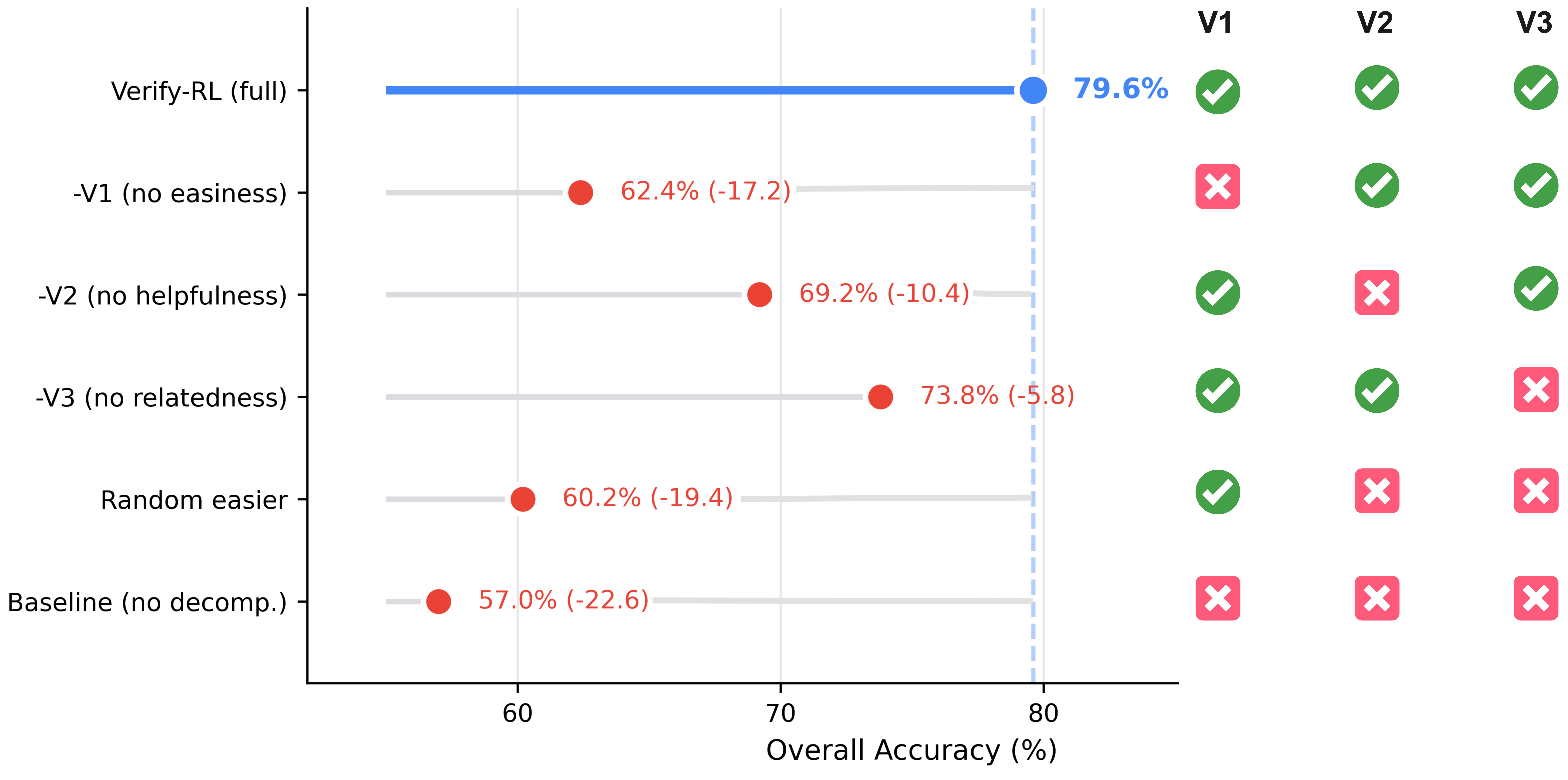}
    \caption{Component ablation on Qwen3-1.5B. Curriculum order, reward complexity, and decomposition depth contributions. Value $\Delta$ shows change from full \textsc{Verify-RL} (GRPO).}
    \label{fig:ablation_components}
\end{figure*}

\subsection{Component Contributions}
Figure~\ref{fig:ablation_components} shows how each component affects performance. Table~\ref{tab:component_ablation} summarizes the numerical contributions of each component.
\begin{table}
\centering
\small
\caption{Component ablation on Qwen3-1.5B (GRPO). All values are overall accuracy (\%) averaged across D1\textendash D5.}
\label{tab:component_ablation}
\setlength{\tabcolsep}{3pt}
\begin{tabularx}{\columnwidth}{l *{2}{>{\centering\arraybackslash}X}}
\toprule
\textbf{Configuration} & \textbf{Acc.} & \textbf{$\Delta$ vs Full} \\
\midrule
\textsc{Verify-RL} (full) & \textbf{79.6} & {} \\
\midrule
\multicolumn{3}{l}{\textit{Curriculum Order}} \\
\quad Easy-to-hard (default) & 79.6 & {} \\
\quad Random order & 70.2 & \textcolor{red}{$-$9.4} \\
\quad Hard-to-easy & 64.8 & \textcolor{red}{$-$14.8} \\
\midrule
\multicolumn{3}{l}{\textit{Decomposition Depth}} \\
\quad Full depth (default) & 79.6 & {} \\
\quad Depth 2 & 73.0 & \textcolor{red}{$-$6.6} \\
\quad Depth 1 only & 66.2 & \textcolor{red}{$-$13.4} \\
\quad No decomposition & 57.0 & \textcolor{red}{$-$22.6} \\
\midrule
\multicolumn{3}{l}{\textit{Reward Complexity}} \\
\quad Full rewards (default) & 79.6 & {} \\
\quad Base reward only & 78.6 & \textcolor{red}{$-$1.0} \\
\bottomrule
\end{tabularx}
\end{table}

Three findings emerge. Curriculum order contributes the largest gains. Easy-to-hard ordering yields +9.4 points over random, with the gap concentrated on harder problems (+16 at $D_5$ vs.\ +1 at $D_1$). Hard-to-easy ordering performs worst because the model encounters problems before learning prerequisites. Reward complexity provides marginal benefit. Adding rule identification, child solution usage, and step rewards together yields only +1.0 points. The simple accuracy-plus-format reward captures most of the signal. Decomposition depth shows diminishing returns. Depth 1 provides +9.2 points over no decomposition (57.0 to 66.2). Depth 2 adds +6.8 (66.2 to 73.0). Full depth adds +6.6 (73.0 to 79.6).

\subsection{Verification Properties}
Table~\ref{tab:ablation_verify} tests decomposition using subsets of verification properties. Removing V1 (easiness) causes the largest drop: 17.2 points. Without difficulty guarantees, the model encounters children it cannot solve. V2 (helpfulness) contributes 10.4 points. When child solutions do not appear in parent solutions, transfer fails. V3 (relatedness) adds 5.8 points through structural coherence. Random children (arbitrary easier problems) perform just 3.2 points above baseline. This confirms that \textsc{Verify-RL} gains stem from verified decomposition, not mere curriculum learning.

\begin{table}
\centering
\scriptsize
\caption{Verification properties ablation on Qwen3-1.5B. Each row removes one or more properties from the full \textsc{Verify-RL} configuration.}
\label{tab:ablation_verify}
\setlength{\tabcolsep}{1.5pt} % Minimal padding for 4 columns
\begin{tabularx}{\columnwidth}{l *{3}{>{\centering\arraybackslash}X}}
\toprule
\textbf{Configuration} & \textbf{Acc.} & \textbf{$\Delta$} & \textbf{Props} \\
\midrule
\textsc{Verify-RL} (full) & \textbf{79.6} & {} & V1,V2,V3 \\
\midrule
$-$V1 (no easiness) & 62.4 & \textcolor{red}{$-17.2$} & V2,V3 \\
$-$V2 (no helpfulness) & 69.2 & \textcolor{red}{$-10.4$} & V1,V3 \\
$-$V3 (no relatedness) & 73.8 & \textcolor{red}{$-5.8$} & V1,V2 \\
\midrule
Random easier & 60.2 & \textcolor{red}{$-19.4$} & V1 only \\
Baseline (no decomp) & 57.0 & \textcolor{red}{$-22.6$} & {} \\
\bottomrule
\end{tabularx}
\end{table}

\subsection{Verification Success Rates}
Figure~\ref{fig:verification_rates_chart} compares verification success across methods.
\begin{figure}
    \centering
    \includegraphics[width=1\linewidth]{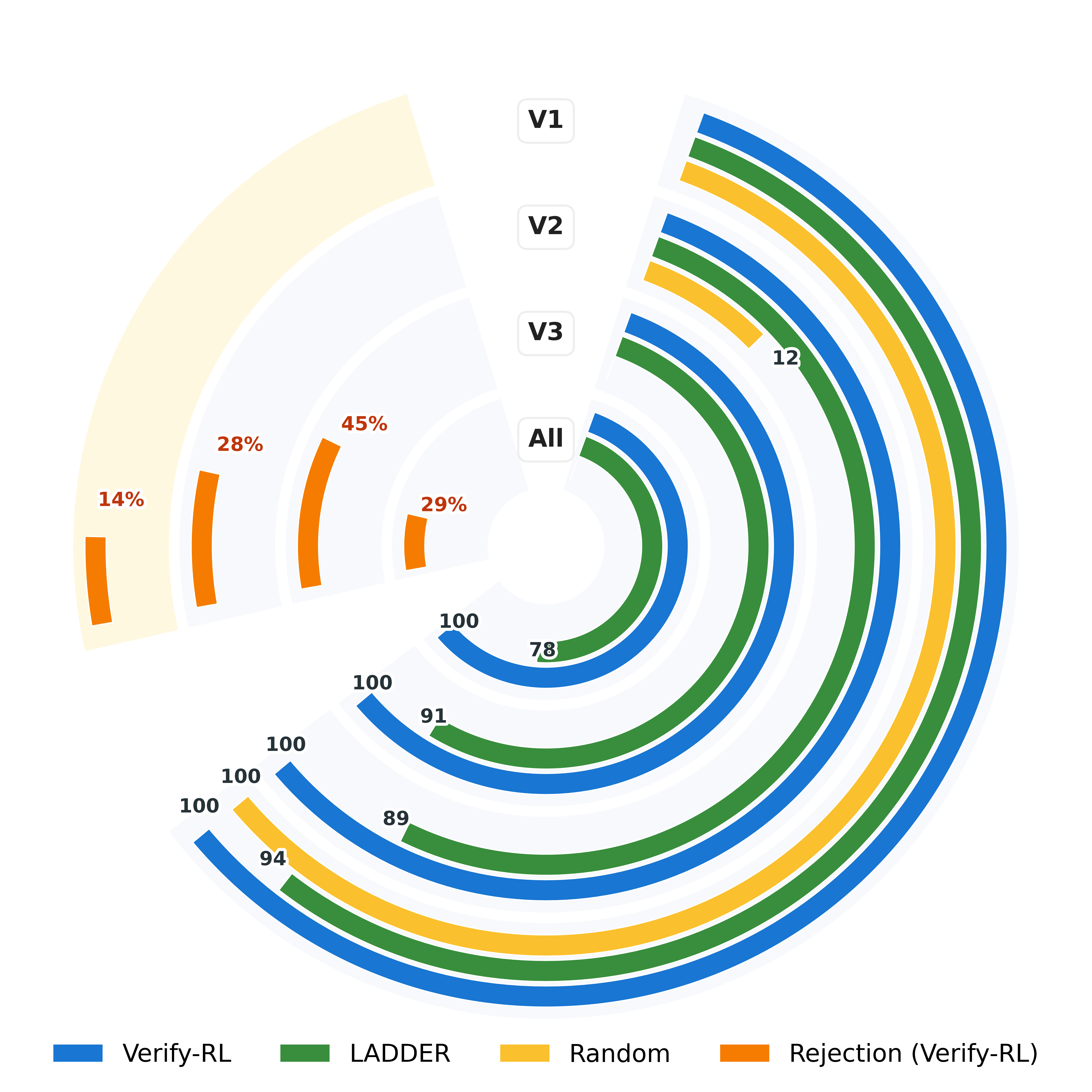}
    \caption{Verification property satisfaction rates across decomposition methods. \textsc{Verify-RL} achieves 100\% on V1--V3 through calculus rule derivation. LADDER (LLM-generated): V1 94.2\%, V2 88.7\%, V3 91.3\%, with 78.4\% joint verification. Random easier problems: V1 100\%, V2 12.3\%, V3 0\%.}
    \label{fig:verification_rates_chart}
\end{figure}
\textsc{Verify-RL} achieves 100\% verification by construction, since decompositions derive directly from calculus rules rather than model generation. This design choice contrasts with LADDER-style LLM generation, which fails V1 in 5.8\% of cases where children are not actually easier (pass rate: 94.2\%), fails V2 in 11.3\% of cases where children do not help parents (pass rate: 88.7\%), and fails V3 in 8.7\% of cases where the relationship is spurious (pass rate: 91.3\%). The 21.6\% joint failure rate reflects overlapping violations: many decompositions fail multiple properties simultaneously. Individual failure rates (V1: 5.8\%, V2: 11.3\%, V3: 8.7\%) sum to 25.8\%, but 4.2\% of decompositions fail two or more properties, yielding 21.6\% unique failures and 78.4\% joint verification. This gap (100\% vs.\ 78.4\%) translates to concrete performance gains: ablations show that perfect verification contributes 22.6 points over the baseline, while LADDER's 78.4\% verification rate would forfeit approximately 4.8 points due to invalid training examples (21.6\% of 22.6). This demonstrates that verification guarantees directly impact model performance, not merely methodological rigor.

\subsection{Error Analysis}
We manually categorized errors on the test set for Qwen3-1.5B after training (Table~\ref{tab:error_analysis}).

\begin{table}
    \centering
    \scriptsize 
    \caption{Error analysis on Qwen3-1.5B.}
    \label{tab:error_analysis}
    \setlength{\tabcolsep}{4pt}
    \renewcommand{\arraystretch}{1.2}
    \begin{tabularx}{\columnwidth}{Xcc}
        \toprule
        \textbf{Error Type} & \textbf{Count} & \textbf{\% Errors} \\
        \midrule
        Chain rule factor omission & 8 & 38.1\% \\
        Wrong rule selection      & 5 & 23.8\% \\
        Simplification error      & 4 & 19.0\% \\
        Coefficient error         & 2 & 9.5\% \\
        Sign error               & 2 & 9.5\% \\
        \bottomrule
    \end{tabularx}
\end{table}

Chain rule factor omission dominates: the model computes $f'(g(x))$ but drops the $g'(x)$ factor. This occurs mainly at $D_4$ and $D_5$ where nested compositions require tracking multiple inner derivatives. Wrong rule selection (23.8\%) arises when the model applies the product rule to compositions like $\sin(x) \cdot \cos(x^2)$ without recognizing the inner $x^2$. Simplification errors involve trigonometric identities or failing to combine like terms.

\subsection{Token Efficiency Analysis}

Beyond accuracy, we examine how training affects response length across difficulty levels (Figure~\ref{fig:token_efficiency}). Token usage is compared between the Qwen3-1.5B baseline and the same model after DPO training on our verified curriculum. Individual responses (small markers) show high variance, while bubble size indicates average tokens per difficulty level.

\begin{figure}
    \centering
    \includegraphics[width=1\linewidth]{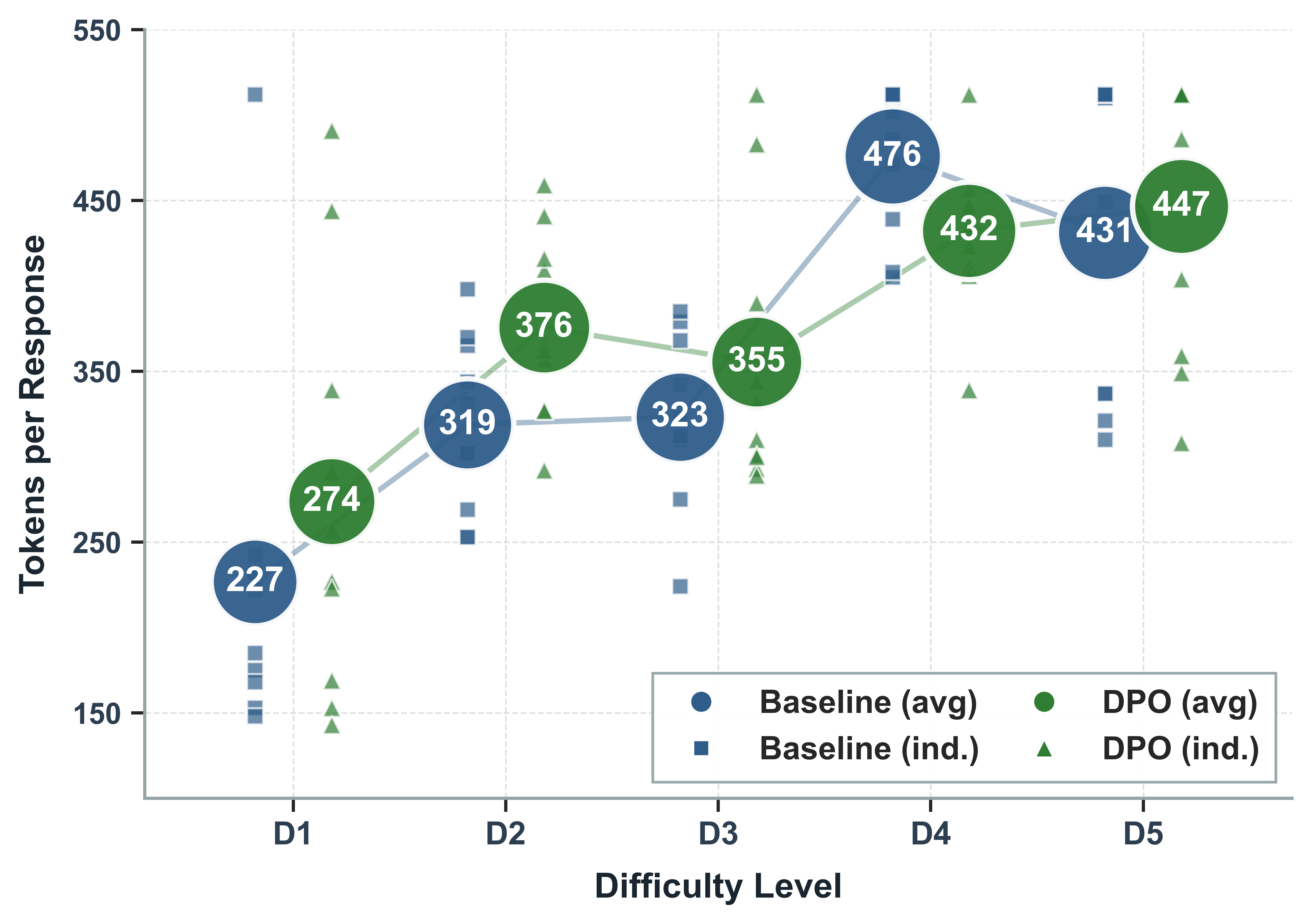}
    \caption{Token usage per response across difficulty levels D1\textendash D5 for Qwen3-1.5B. Blue: baseline model; green: DPO-trained. Bubble size indicates average tokens; small markers show individual responses. Key pattern: baseline shows anomalous spike at D4 (476 tokens vs.\ 323 at D3), suggesting confusion on nested compositions. DPO produces more calibrated scaling: tokens increase consistently with difficulty (274 to 447), and D4 responses are 9\% shorter (432 vs.\ 476) despite higher accuracy.}
    \label{fig:token_efficiency}
\end{figure}

Three patterns emerge. Both models show increased token usage with difficulty, as expected. Harder problems require more reasoning steps. But the baseline exhibits an anomalous spike at D4 (476 tokens, 47\% above D3), followed by a drop at D5 (431 tokens). This non-monotonic pattern suggests the untrained model struggles specifically with product-composition combinations ($D_4$), producing verbose but often incorrect attempts. DPO training yields more calibrated token scaling. Usage increases consistently from 274 (D1) to 447 (D5). D4 responses are 9\% shorter (432 vs.\ 476) while achieving substantially higher accuracy (Table~\ref{tab:d5_comparison} shows +28 points at D5: 60\% vs.\ 32\%). This efficiency gain (fewer tokens with better accuracy) indicates that verified curriculum training helps models produce concise, correct reasoning rather than lengthy incorrect attempts.

\section{Limitations and Future Work}

Several limitations merit discussion. Our instantiation in symbolic differentiation, while mathematically rigorous, represents a narrow domain. Extending \textsc{Verify-RL} to broader mathematical reasoning (integration, differential equations, proofs) requires identifying analogous rule-based decompositions. Our experiments use models up to 3B parameters. Scaling behavior on larger models remains unexplored. The decomposition tree grows exponentially with problem complexity, potentially limiting applicability to very deep nesting.

Future work could extend \textsc{Verify-RL} to other domains with formal rule systems, such as algebraic simplification or logical inference. The principle that decomposition should derive from domain rules rather than model generation may generalize beyond mathematics to any domain with compositional structure.

\section{Conclusion}
\label{sec:conclusion}
We introduced \textsc{Verify-RL}, a framework for verifiable recursive decomposition in LLM mathematical reasoning. Prior work relies on heuristic decomposition with 21.6\% failure rates under formal verification. \textsc{Verify-RL} guarantees that every parent-child relationship satisfies three provable properties: the child is strictly easier (V1), the child's solution helps solve the parent (V2), and the relationship derives from a formal rule (V3). We instantiated \textsc{Verify-RL} in symbolic differentiation, where calculus rules provide the mathematical grounding.

Our experiments demonstrate consistent gains across model scales. Qwen3-1.5B achieved the largest overall gain (+40\% relative, from 57.0\% to 79.6\%), with D5 accuracy improving from 32\% to 68\% (+112\%). Ablation studies reveal that curriculum structure drives performance. Easy-to-hard ordering adds 9.4 points. Complex reward shaping provides only marginal benefit (+1.0 points). The verification properties prove essential. Without V1 (easiness), performance drops 17.2 points. Without V2 (helpfulness), 10.4 points. Without V3 (relatedness), 5.8 points.

\section*{Acknowledgments}
 The authors extend their appreciation to the National Science Foundation of China under grants (No.:62471411).

\section*{Data availability}
The datasets used and analyzed during the current study are available from the corresponding author on reasonable request.

\section*{Declarations of Competing interests}
The authors have no competing interests to declare that they are relevant to the content of this article.

\bibliographystyle{elsarticle-num} 
\bibliography{references}

\appendix

\end{document}